\newtheorem{thm}{Theorem}
\newtheorem{lem}{Lemma}
\newtheorem{claim}{Claim}
\newtheorem{cor}{Corollary}
\newtheorem{rmk}{Remark}
\newtheorem{prop}{Proposition}
\newcommand{\eg}{\textit{e.g.,} }
\newcommand{\R}{\mathbb{R}}
\newcommand{\thmref}[1]{Theorem~\ref{#1}}
\renewcommand{\eqref}[1]{(\ref{#1})}
\newcommand{\currw}{x_t}
\newcommand{\nextw}{x_{t+1}}
\newcommand{\wi}{x_i}
\newcommand{\currwi}{x_{t,i}}
\newcommand{\w}{x}
\newcommand{\feas}{\mathcal F}
\newcommand{\proj}{\Pi_\feas}
\newcommand{\fs}{f_t}
\newcommand{\currg}{g_t}
\newcommand{\prevm}{m_{t-1}}
\newcommand{\currm}{m_t}
\newcommand{\prevv}{v_{t-1}}
\newcommand{\currv}{v_t}
\newcommand{\curre}{\eta_t}
\newcommand{\currei}{\eta_{t,i}}
\newcommand{\prevei}{\eta_{t-1,i}}
\newcommand{\curra}{\alpha_t}
\newcommand{\normed}[1]{\left\lVert {#1} \right\rVert}
\newcommand{\btwo}{\beta_2}
\newcommand{\bone}{\beta_1}
\newcommand{\bonet}{\beta_{1t}}
\newcommand{\expec}[2]{\mathbb E_{#1} \left[ {#2} \right]}
\title{On the Convergence of AdaBound \\ and its Connection to SGD}
\author{%
  Pedro Savarese \\
  Toyota Technological Institute at Chicago\\
  \texttt{savarese@ttic.edu} \\
}
\begin{document}

\maketitle

\begin{abstract}
   Adaptive gradient methods such as Adam have gained extreme popularity due to their success in training complex neural networks and less sensitivity to hyperparameter tuning compared to SGD. However, it has been recently shown that Adam can fail to converge and might cause poor generalization -- this lead to the design of new, sophisticated adaptive methods which attempt to generalize well while being theoretically reliable.
   
   In this technical report we focus on AdaBound, a promising, recently proposed optimizer. We present a stochastic convex problem for which AdaBound can provably take arbitrarily long to converge in terms of a factor which is \textit{not} accounted for in the convergence rate guarantee of \cite{adabound}. We present a new $O(\sqrt T)$ regret guarantee under different assumptions on the bound functions, and provide empirical results on CIFAR suggesting that a specific form of momentum SGD can match AdaBound's performance while having less hyperparameters and lower computational costs.
\end{abstract}

\section{Introduction}
\label{sec-intro}

We consider first-order optimization methods which are concerned with problems of the following form:
\begin{equation}
    \min_{\w \in \feas} f(\w)
\label{eq:min}
\end{equation}
where $\feas \subseteq \R^d$ is the feasible set of solutions and $f : \R^d \to \R$ is the objective function. First-order methods typically operate in an iterative fashion: at each step $t$, the current candidate solution $\currw$ is updated using both zero-th and first-order information about $f$ (\eg $f(\currw)$ and $\nabla f(\currw)$, or unbiased estimates of each). Methods such as gradient descent and its stochastic counterpart can be written as:
\begin{equation}
    \nextw = \proj \left( \currw - \curra \cdot \currm \right)
\end{equation}

where $\curra \in \R$ is the learning rate at step $t$, $\currm \in \R^d$ is the update direction (\eg $\nabla f(\currw)$ for deterministic gradient descent), and $\Pi_\feas$ denotes a projection onto $\feas$. The behavior of vanilla gradient-based methods is well-understood under different frameworks and assumptions ($O(\sqrt T)$ regret in the online convex framework \citep{onlinegd}, $O(1 / \sqrt T)$ suboptimality in the stochastic convex framework, and so on).

In contrast with SGD, adaptive gradient methods such as AdaGrad \citep{adagrad}, RMSProp \citep{rmsprop} and Adam \citep{adam} propose to compute a different learning rate for each parameter in the model. In particular, the parameters are updated according to the following rule:
\begin{equation}
    \nextw = \proj \left( \currw - \curre \odot \currm \right)
\end{equation}
where $\curre \in \R^d$ are parameter-wise learning rates and $\odot$ denotes element-wise multiplication. For Adam, we have $\currm = \bone \prevm + (1 - \bone) \currg$ and $\curre = 1 / \sqrt \currv$ with $\currv = \btwo \prevv + (1 - \btwo) \currg^2$, where $\currg$ captures first-order information of the objective function (\eg $\expec{}{\currg} = \nabla f(\currw)$ in the stochastic setting).

Adaptive methods have become popular due to their flexibility in terms of hyperparameters, which require less tuning than SGD. In particular, Adam is currently the de-facto optimizer for training complex models such as BERT \citep{bert} and VQ-VAE \citep{vqvae}.

Recently, it has been observed that Adam has both theoretical and empirical gaps. \cite{amsgrad} showed that Adam can fail to converge in the stochastic convex setting, while \cite{marginal} have formally demonstrated that Adam can cause poor generalization -- a fact often observed when training CNN-like models such as ResNets \citep{resnet1}. These shortcomings have motivated the design of new adaptive methods, with the end-goal of achieving strong theoretical guarantees and SGD-like empirical performance: most notably, \cite{amsgrad} propose AMSGrad, which was shown to have the same regret rate as SGD for online convex optimization. \cite{avagrad} propose Delayed Adam and AvaGrad -- both which enjoy the same convergence rate as SGD in the stochastic non-convex setting -- and show that, with careful tuning, adaptive methods can match or even best SGD's empirical performance. Nonetheless, there has been continued effort in further improving adaptive methods.

AdaBound \citep{adabound} is a recently proposed adaptive gradient method that aims to bridge the empirical gap between Adam-like methods and SGD, and consists of enforcing dynamic bounds on $\curre$ such that as $t$ goes to infinity, $\curre$ converges to a vector whose components are equal -- hence degenerating to SGD. AdaBound comes with a $O(\sqrt T)$ regret rate in the online convex setting, yielding an immediate guarantee in the stochastic convex framework due to \cite{online2batch}. Moreover, empirical experiments suggest that it is capable of outperforming SGD in image classification tasks -- problems where adaptive methods have historically failed to provide competitive results.

In Section \ref{sec-slow}, we highlight issues in the convergence rate proof of AdaBound (Theorem 4 of \cite{adabound}), and present a stochastic convex problem for which AdaBound can take \textit{arbitrarily} long to converge. More importantly, we show that the presented problem leads to a contradiction with the convergence guarantee of AdaBound \textit{while satisfying all of its assumptions}, implying that Theorem 4 of \cite{adabound} is indeed incorrect. In Section \ref{sec-convergence}, we introduce a new assumption which yields a $O(\sqrt T)$ regret guarantee without assuming that the bound functions are monotonic nor that they converge to the same limit. Driven by the new guarantee, in Section \ref{sec-experiments} we re-evaluate the performance of AdaBound on the CIFAR dataset, and observe that its performance can be matched with a specific form of SGDM \footnote{Available at \url{https://github.com/lolemacs/adabound-and-csgd}}, whose computational cost is significantly smaller than that of Adam-like methods.

\section{Notation}
\label{sec-notation}

For vectors $a = [a_1, a_2, \dots], b = [b_1, b_2, \dots] \in \R^d$ and scalar $b \in \R$, we use the following notation: $\frac1{a}$ for element-wise division ($\frac1{a} = [\frac1{a_1}, \frac1{a_2}, \dots]$), $\sqrt a$ for element-wise square root ($\sqrt a = [\sqrt{a_1}, \sqrt{a_2}, \dots]$), $a + b$ for element-wise addition ($a+b = [a_1+b, a_2+b, \dots]$), $a \odot b$ for element-wise multiplication ($a \odot b = [a_1 b_1, a_2 b_2, \dots]$). Moreover, $\normed{a}$ is used to denote the $\ell_2$-norm: other norms will be specified whenever used (\eg $\normed{a}_\infty$).

For subscripts and vector indexing, we adopt the following convention: the subscript $t$ is used to denote an object related to the $t$-th iteration of an algorithm (\eg $\currw \in \R^d$ denotes the iterate at time step $t$); the subscript $i$ is used for indexing: $\wi \in \R$ denotes the $i$-th coordinate of $\w \in \R^d$. When used together, $t$ precedes $i$: $\currwi \in \R$ denotes the $i$-th coordinate of $\currw \in \R^d$.
\section{AdaBound's Arbitrarily Slow Convergence}
\label{sec-slow}

\begin{algorithm}
    \caption{\textsc{AdaBound}}
    \textbf{Input:} $\w_1 \in \R^d$, initial step size $\alpha$, $\{\bonet\}_{t=1}^T$, $\beta_2$, lower bound function $\eta_l$, upper bound function $\eta_u$
    \begin{algorithmic}[1]
    \State Set $m_0 = 0, v_0 = 0$
    \For{$t = 1$ \textbf{to} $T$}
    \State $\currg = \nabla \fs(\currw)$
    \State $\currm = \bonet \prevm + (1-\bonet) \currg$
    \State $\currv = \btwo \prevv + (1-\btwo) \currg^2$
    \State $\hat{\curre} = \mathrm{Clip}(\alpha/\sqrt \currv, \eta_l(t), \eta_u(t))$ and $\curre = \hat{\eta}_t / \sqrt{t}$
    \State $\nextw = \Pi_{\feas, \mathrm{diag}(\eta_t^{-1})} (\currw - \eta_t \odot m_t)$
    \EndFor
    \end{algorithmic}
\label{alg:adabound}
\end{algorithm}

AdaBound is given as Algorithm \ref{alg:adabound}, following \citep{adabound}. It consists of an update rule similar to Adam, except for the extra element-wise clipping operation $\mathrm{Clip}(\alpha/\sqrt \currv, \eta_l(t), \eta_u(t)) = \max \left( \min(\alpha / \sqrt \currv, \eta_u(t)), \eta_l(t) \right)$, which assures that $\hat{\currei} \in [\eta_l(t), \eta_u(t)]$ for all $i \in [d]$. The bound functions $\eta_l, \eta_u$ are chosen such that $\eta_l(t)$ is non-decreasing, $\eta_u(t)$ is non-increasing, and $\lim_{t \to \infty} \eta_l(t) = \lim_{t \to \infty} \eta_u(t) = \alpha^\star$, for some $\alpha^\star$. It then follows that $\lim_{t \to \infty} \hat{\curre} = \vec 1 \alpha^\star$, thus AdaBound degenerates to SGD in the time limit.

In \citep{adabound}, the authors present the following Theorem:

\begin{thm}(Theorem 4 of \cite{adabound})
\label{thm:adaboundorig}
Let $\{x_t\}$ and $\{v_t\}$ be the sequences obtained from Algorithm~\ref{alg:adabound}, $\beta_1 = \beta_{11}$, $\beta_{1t} \leq \beta_1$ for all $t \in [T]$ and $\beta_1 / \sqrt{\beta_2} < 1$.
Suppose $\eta_l(t+1) \geq \eta_l(t) > 0$, $\eta_u(t+1) \leq \eta_u(t)$, $\eta_l(t) \rightarrow \alpha^*$ as $t \rightarrow \infty$, $\eta_u(t) \rightarrow \alpha^*$ as $t \rightarrow \infty$, $L_\infty = \eta_l(1)$ and $R_\infty = \eta_u(1)$.
Assume that $\|x - y\|_\infty \leq D_\infty$ for all $x, y \in \mathcal{F}$ and $\|\nabla f_t(x)\| \leq G_2$ for all $t \in [T]$ and $x \in \mathcal{F}$. For $x_t$ generated using the \textsc{AdaBound} algorithm, we have the following bound on the regret
\begin{equation}
    R_T \leq \frac{D_\infty^2 \sqrt{T}}{2(1-\beta_1)} \sum_{i=1}^d \hat{\eta}_{T,i}^{-1} + \frac{D_\infty^2}{2(1-\beta_1)} \sum_{t=1}^T \sum_{i=1}^d \beta_{1t} \eta_{t,i}^{-1} + (2\sqrt{T} - 1) \frac{R_\infty G_2^2}{1-\beta_{1}}
\end{equation}
\end{thm}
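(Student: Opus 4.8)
The plan is to follow the standard online-convex-optimization template pioneered by Zinkevich and adapted to adaptive methods in the AMSGrad analysis, since AdaBound is structurally AMSGrad with an extra clipping of the per-coordinate step sizes. Writing $\optw = \argmin_{\w \in \feas} \sum_{t=1}^T \fs(\w)$ and using convexity of each $\fs$, I would first reduce the regret to a linear problem: $\reg = \sum_{t=1}^T (\fs(\currw) - \fs(\optw)) \leq \sum_{t=1}^T \dotp{\currg}{\currw - \optw}$. The whole proof then amounts to controlling $\sum_{t=1}^T \dotp{\currg}{\currw - \optw}$ using the structure of the update.

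For a single step, I would exploit the fact that $\nextw$ is the projection of $\currw - \curre \odot \currm$ onto $\feas$ in the norm induced by $\diag(\curre^{-1})$, which is non-expansive with respect to that same norm. This gives $\norm{\nextw - \optw}_{\diag(\curre^{-1})}^2 \leq \norm{\currw - \curre \odot \currm - \optw}_{\diag(\curre^{-1})}^2$. Expanding the right-hand side coordinate-wise yields, after rearranging, a bound of the form $\dotp{\currm}{\currw - \optw} \leq \tfrac12 \sum_i \currei^{-1}\big((\currwi - \optw_i)^2 - (x_{t+1,i} - \optw_i)^2\big) + \tfrac12 \sum_i \currei m_{t,i}^2$. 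To convert the momentum inner product into the gradient inner product I would substitute the recursion $\currm = \bonet \prevm + (1-\bonet)\currg$, isolate $(1-\bonet)\dotp{\currg}{\currw - \optw}$, and bound the resulting cross term $\bonet \dotp{\prevm}{\currw - \optw}$ with Young's inequality, splitting it into a distance term and a $\prevm$-magnitude term so that the $(1-\bonet)$ factor can be absorbed into the $1/(1-\bone)$ prefactors appearing in the final bound.

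Summing over $t$ then separates into three contributions matching the three terms of the claimed bound. The distance terms $\sum_{t=1}^T \sum_i \currei^{-1}\big((\currwi - \optw_i)^2 - (x_{t+1,i} - \optw_i)^2\big)$ should be handled by summation by parts, using $\norm{x - y}_\infty \leq D_\infty$ to replace each squared coordinate gap by $D_\infty^2$; this collapses the sum to a single boundary term proportional to $D_\infty^2 \sqrt T \sum_i \hat{\eta}_{T,i}^{-1}$, giving the first term. The momentum cross terms accumulate into the second term $\tfrac{D_\infty^2}{2(1-\bone)}\sum_{t=1}^T \sum_i \bonet \currei^{-1}$, again after bounding coordinate gaps by $D_\infty^2$. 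Finally, the curvature term is controlled via $\currei \leq \eta_u(t)/\sqrt t \leq R_\infty/\sqrt t$ together with $\norm{\currm} \leq G_2$ (since $\currm$ is a convex combination of gradients of norm at most $G_2$) and $\sum_{t=1}^T 1/\sqrt t \leq 2\sqrt T - 1$, producing the third term $(2\sqrt T - 1)\tfrac{R_\infty G_2^2}{1-\bone}$.

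The step I expect to be the crux is the summation by parts on the distance terms. For the telescoping to collapse to the clean boundary term proportional to $\sqrt T \sum_i \hat{\eta}_{T,i}^{-1}$, one needs the per-coordinate inverse step sizes $\currei^{-1} = \sqrt t / \hat{\eta}_{t,i}$ to be non-decreasing in $t$, so that every difference $\currei^{-1} - \eta_{t-1,i}^{-1}$ is non-negative and can be bounded against $D_\infty^2$. The monotonicity of the bound functions $\eta_l, \eta_u$ does not by itself force $\hat{\eta}_{t,i}$ to vary slowly enough for $\sqrt t / \hat{\eta}_{t,i}$ to be monotone, because the clipping $\mathrm{Clip}(\alpha/\sqrt{\currv}, \eta_l(t), \eta_u(t))$ can make $\hat{\eta}_{t,i}$ jump within its shrinking window. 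Verifying---or, as Section~\ref{sec-slow} argues, failing to verify---that this monotonicity genuinely holds under the theorem's hypotheses is exactly where the delicate, and ultimately problematic, point of the argument lies.
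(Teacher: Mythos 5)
Your proposal is, correctly, not a completed proof---and it should not be, because the statement is false. The paper never proves \thmref{thm:adaboundorig}; it quotes Theorem 4 of \cite{adabound} only to refute it. Your reconstruction of the original argument (convexity reduction, non-expansiveness of the projection in the $\diag(\eta_t^{-1})$-norm, momentum recursion, then summation by parts on the distance terms) matches the structure of the flawed proof in \cite{adabound}, and your diagnosis of the crux is exactly the paper's diagnosis: the telescoping collapse to the boundary term $D_\infty^2 \sqrt{T} \sum_i \hat{\eta}_{T,i}^{-1}$ needs $\currei^{-1} = \sqrt{t}/\hat{\eta}_{t,i}$ to be non-decreasing in $t$, and the clipping $\mathrm{Clip}(\alpha/\sqrt{\currv}, \eta_l(t), \eta_u(t))$ lets $\hat{\eta}_{t,i}$ jump anywhere inside the window $[\eta_l(t), \eta_u(t)]$, so monotonicity of the window endpoints does not give monotonicity of $\currei^{-1}$. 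The paper makes this quantitative in \secref{sec-slow}: the needed inequality $\currei^{-1} \geq \prevei^{-1}$ holds in general only when $\frac{\eta_l(t-1)}{\eta_u(t)} \geq \sqrt{1 - \frac{1}{t}}$ for all $t$, a condition violated even by the bound functions $\eta_l(t) = 1 - \frac{1}{\gamma t + 1}$, $\eta_u(t) = 1 + \frac{1}{\gamma t}$ that \cite{adabound} itself uses.

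The one thing to add is that the gap is not repairable by a cleverer argument under the stated hypotheses: \thmref{the:sto-example} constructs a stochastic convex problem and bound functions satisfying every assumption of \thmref{thm:adaboundorig} on which AdaBound keeps $\expec{}{f(\currw)} - f(\w^*) \geq 1$ for arbitrarily many steps, and \corref{cor:incorrect} turns this into an outright contradiction with the claimed regret bound. So the correct deliverable for this statement is a counterexample, not a proof; the constructive alternative is \thmref{the:adabound-corrected}, which replaces the monotonicity and common-limit assumptions by $\frac{t}{\eta_l(t)} - \frac{t-1}{\eta_u(t-1)} \leq M$ and recovers an $O(\sqrt{T})$ regret along essentially the lines you sketched (the intermediate bound of \lemref{lem:adabound-intermediate} is exactly the part of the original proof that survives). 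Your decision to flag the monotonicity step as ``ultimately problematic'' rather than wave it through is precisely right.
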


Its proof claims that $\currei^{-1} \geq \prevei^{-1}$ follows from the definition of $\curre$ in AdaBound, a fact that only generally holds if $\frac{\eta_l(t-1)}{\eta_u(t)} \geq \sqrt{1 - \frac1t}$ for all $t$. Even for the bound functions $\eta_l(t) = 1 - \frac{1}{\gamma t + 1}, \eta_u(t) = 1 + \frac{1}{\gamma t}$ considered in \cite{adabound} and used in the released code, this requirement is not satisfied for any $\gamma > 0$. Finally, it is also possible to show that AMSBound does not meet this requirement either, hence the proof of Theorem 5 of \cite{adabound} is also problematic.

It turns out that the convergence of AdaBound in the stochastic convex case can be arbitrarily slow, even for bound functions that satisfy the assumptions in \thmref{thm:adaboundorig}:

\begin{thm}
\label{the:sto-example}
For any constant $\beta_1, \beta_2 \in [0,1)$, $\beta_1 < \sqrt{\beta_2}$ and initial step size $\alpha$, there exist bound functions $\eta_l(t;K), \eta_u(t;K)$ such that $\eta_l(t+1; K) \geq \eta_l(t; K) > 0$, $\eta_u(t+1; K) \leq \eta_u(t; K)$, $\lim_{t \to \infty} \eta_l(t; K) = \lim_{t \to \infty} \eta_u(t; K)$, and a stochastic convex optimization problem for which the iterates produced by \textsc{AdaBound} satisfy $\expec{}{f(\currw)} - f(\w^*) \geq 1$ for all $t \leq K$.
\end{thm}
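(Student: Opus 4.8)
The plan is to exhibit, for every target horizon $K$, a one-dimensional stochastic convex instance on which \textsc{AdaBound} stays trapped away from the optimum for $K$ steps, driven by the Adam non-convergence construction of \cite{amsgrad}. I would take $\feas = [-1,1]$ together with a gradient oracle that returns $\currg = C$ with probability $p$ and $\currg = -1$ with probability $1-p$, where (for the given $\bone, \btwo$) $C$ and $p$ are tuned so that $\expec{}{\currg} = \delta > 0$. Then $f(\w) = \delta \w$ is convex with minimizer $\w^* = -1$, and I would initialize $\w_1 = 1$ at the opposite end of the interval.

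The essential device is to decouple the common limit $\alpha^\star$ and the endpoints $\eta_l(1), \eta_u(1)$ from $K$. Since $\currg^2 \in \{1, C^2\}$, one has $\currv \in [\,1-\btwo,\ C^2\,]$ for every realization and every $t \ge 1$, so $\alpha/\sqrt{\currv}$ is confined to a fixed compact interval $I$ that does not depend on $K$. I would therefore fix $\eta_l(1)$ below $I$ and $\eta_u(1)$ above $I$, hold both functions at these endpoints for $t \le K$, and only let $\eta_l(\cdot;K)$ increase and $\eta_u(\cdot;K)$ decrease toward their common limit $\alpha^\star$ for $t > K$. Such functions satisfy every hypothesis of \thmref{thm:adaboundorig} on the bounds, and by construction the clip in line~6 provably never binds for any $t \le K$, so $\hat{\curre} = \alpha/\sqrt{\currv}$ throughout the loose phase; the parameter $K$ controls only how long the bounds remain loose.

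Consequently, for $t \le K$ the update reduces to $\nextw = \proj\!\big(\currw - \tfrac{\alpha}{\sqrt t\,\sqrt{\currv}}\,\currm\big)$, which is Adam's update up to the deterministic positive factor $1/\sqrt t$, and that factor cannot reverse the direction of motion. The core estimate of \cite{amsgrad} — that the second-moment normalization overweights the frequent gradient $-1$ relative to the rare gradient $C$, producing net expected displacement toward $+1$ despite $\expec{}{\currg} = \delta > 0$ — therefore applies, and starting at the boundary $\w_1 = 1$ the projection keeps $\expec{}{\currw}$ bounded away from $\w^* = -1$ by a positive constant for all $t \le K$. Multiplying $f$ and the bound functions by a common constant $\lambda$ (which leaves the iterates unchanged by scale-invariance of the non-clipping update, and preserves all the assumptions) then inflates the gap so that $\expec{}{f(\currw)} - f(\w^*) \ge 1$ for every $t \le K$.

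The clip analysis is clean thanks to the deterministic envelope $\currv \in [\,1-\btwo,\ C^2\,]$, so the main obstacle I anticipate is the trajectory estimate itself: I must port the stationary/large-$t$ argument of \cite{amsgrad}, normally stated for a constant step size and in the limit, to a per-step statement valid for \emph{every} $t \le K$ under the decaying factor $1/\sqrt t$, the transients from the initialization $m_0 = v_0 = 0$, and the projection onto $[-1,1]$. The delicate point is showing that the positive expected drift, although shrinking like $1/\sqrt t$, still prevents $\expec{}{\currw}$ from migrating toward $-1$ within $K$ steps; this is precisely where the construction uses that $\w_1$ lies on the boundary and that the oracle's rare large gradient keeps $\currv$ — and hence the Adam-style sign reversal — active throughout the loose phase.
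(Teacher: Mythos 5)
Your construction is essentially the paper's own proof: the same one-dimensional stochastic instance from \cite{amsgrad} over $[-1,1]$, the same deterministic envelope $\currv \in [1-\btwo,\, C^2]$, and the same device of holding the bound functions loose until step $K$ so that the clip never binds and AdaBound coincides with Adam for $t \leq K$. The one point worth correcting is the obstacle you flag at the end: no ``porting'' of the drift estimate is needed. Theorem 3 of \cite{amsgrad} is stated precisely for Adam with step size $\alpha_t = \alpha/\sqrt{t}$, zero-initialized moments, and projection onto $[-1,1]$ --- i.e., exactly the dynamics you obtain after the reduction --- and it yields the per-step statement $\expec{}{\nextw} \geq \expec{}{\currw}$ for all $t$, provided $C$ is large enough as a function of $\delta, \bone, \btwo$. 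This is how the paper closes the argument: with $\w_1 \geq 0$ it gives $\expec{}{\currw} \geq 0$, hence $\expec{}{f(\currw)} - f(\w^*) \geq \delta$. Your final rescaling step is also unnecessary and, as written, has the direction backwards: scaling $f$ by $\lambda$ scales $\alpha/\sqrt{\currv}$ by $1/\lambda$, so the bound functions would have to be \emph{divided} by $\lambda$ (not multiplied) to keep the clip inactive. The paper sidesteps this entirely by putting the slack in the oracle: with $p = \frac{1+\delta}{C+1}$ the expected loss is $f(\w) = \delta \w$, and one simply takes $\delta = 1$ (choosing $C$ large relative to it), so the gap is at least $1$ with no rescaling at all.
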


\begin{proof}
    We consider the same stochastic problem as presented in \cite{amsgrad}, for which Adam fails to converge. In particular, a one-dimensional problem over $[-1,1]$, where $\fs$ is chosen i.i.d. as follows:
    \begin{equation}
        \fs(\w)  =
            \begin{cases}
            C \w, \quad \text{with probability } \quad p \coloneqq \frac{1+\delta}{C+1} \\
            -\w, \quad  \text{with probability } \quad 1-p
            \end{cases}
    \end{equation}
    Here, $C$ is taken to be large in terms of $\delta, \bone$ and $\btwo$, and $C > \max(1, \alpha)$. Now, consider the following bound functions:
    \begin{equation}
        \eta_l(t) = \alpha / C
        \quad\quad\quad
        \eta_u(t;K) =
        \begin{cases}
        \alpha / \sqrt{1 - \btwo}, \quad \text{for $t \leq K$}\\
        \alpha / C, \quad \text{otherwise}
        \end{cases}
    \end{equation}
    and check that $\eta_l$ and $\eta_u$ are non-decreasing and non-increasing in $t$, respectively, and $\lim_{t \to \infty} \eta_l(t) = \lim_{t \to \infty} \eta_u(t;K) = \alpha / C$. We will show that such bound functions can be effectively ignored for $t \leq K$. Check that, for all $t$:
    \begin{equation}
    \begin{split}
        & \currv = (1 - \btwo) \sum_{i=1}^t \btwo^{t-i} g_i^2 \leq C^2 (1 - \btwo^{t}) \leq C^2 \\
        & \currv = (1 - \btwo) \sum_{i=1}^t \btwo^{t-i} g_i^2 \geq 1 - \btwo^{t} \geq 1 - \btwo
    \end{split}
    \end{equation}
    where we used the fact that $\currg^2 \in \{1, C^2\}$ and that $C > 1$. Hence, we have, for $t \leq K$:
    \begin{equation}
        \begin{split}
            \eta_l(t) = \frac{\alpha}{C} \leq \frac{\alpha}{\sqrt \currv} \leq \frac{\alpha}{\sqrt{1-\btwo}} = \eta_u(t;K)
        \end{split}
    \end{equation}
    Since $\eta_l(t) \leq \frac{\alpha}{\sqrt \currv} \leq \eta_u(t;K)$ for all $t \leq K$, the clipping operation acts as an identity mapping and $\hat \curre = \frac{\alpha}{\sqrt \currv}$. Therefore, in this setting, AdaBound produces the same iterates $\{\currw\}_{t=1}^K$ as Adam. We can then invoke Theorem 3 of \cite{amsgrad}, and have that, with $C$ large enough (as a function of $\delta, \bone, \btwo$), for all $t \leq K$, we have $\expec{}{\nextw} \geq \expec{}{\currw}$. In particular, with $\w_1 \geq 0$, $\expec{}{\currw} \geq 0$  for all $t \leq K$ and hence $\expec{}{f(\currw)} - f(\w^*) \geq - f(\w^*) = \delta$. Setting $\delta = 1$ finishes the proof.
\end{proof}

While the bound functions considered in the Theorem above might seem artificial, the same result holds for bound functions of the form $\eta_l(t) = 1 - \frac{1}{\gamma t + 1}$ and $\eta_u(t) = 1 + \frac{1}{\gamma t}$, considered in \cite{adabound} and in the publicly released implementation of AdaBound:

\begin{claim}
    \thmref{the:sto-example} also holds for the bound functions $\eta_l(t; K) = 1 - \frac{1}{\gamma t + 1}$ and $\eta_u(t; K) = 1 + \frac{1}{\gamma t}$ with
    \begin{equation}
        \gamma = \frac1{K} \cdot \min \left( \frac{\alpha}{C-\alpha}, \frac{\sqrt{1-\btwo}}{\alpha} \right)
    \end{equation}
\end{claim}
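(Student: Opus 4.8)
The plan is to reduce the Claim to the already-established \thmref{the:sto-example} by showing that the particular value of $\gamma$ forces the $\mathrm{Clip}$ operation to act as the identity for every $t \leq K$, so that on the same stochastic problem from \cite{amsgrad} AdaBound again produces exactly Adam's iterates. Everything downstream of the identity-clipping fact (invoking Theorem 3 of \cite{amsgrad}, propagating $\expec{}{\currw} \geq 0$, and concluding $\expec{}{f(\currw)} - f(x^\star) \geq 1$) is then verbatim the argument of \thmref{the:sto-example}, so the only new content is the inequality $\eta_l(t;K) \leq \frac{\alpha}{\sqrt{v_t}} \leq \eta_u(t;K)$ for $t \leq K$.

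First I would record that the proposed functions satisfy the hypotheses of \thmref{the:sto-example}: $\eta_l(t;K) = 1 - \frac{1}{\gamma t + 1}$ is positive and non-decreasing in $t$ (using $\gamma > 0$, which holds since $C > \alpha$ and $\beta_2 < 1$ keep both entries of the minimum strictly positive), $\eta_u(t;K) = 1 + \frac{1}{\gamma t}$ is non-increasing in $t$, and both tend to $1$ as $t \to \infty$. Next I would reuse the bound $1 - \beta_2 \leq v_t \leq C^2$ from the proof of \thmref{the:sto-example}, which gives $\frac{\alpha}{C} \leq \frac{\alpha}{\sqrt{v_t}} \leq \frac{\alpha}{\sqrt{1-\beta_2}}$ for all $t$. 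Hence it suffices to show that the clipping interval $[\eta_l(t;K), \eta_u(t;K)]$ contains $[\frac{\alpha}{C}, \frac{\alpha}{\sqrt{1-\beta_2}}]$ for every $t \leq K$.

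Because $\eta_l(\cdot\,;K)$ is non-decreasing and $\eta_u(\cdot\,;K)$ is non-increasing on $t \in [1,K]$, the two containment requirements are tightest at $t = K$, so it is enough to verify $\eta_l(K;K) \leq \frac{\alpha}{C}$ and $\eta_u(K;K) \geq \frac{\alpha}{\sqrt{1-\beta_2}}$. The first rearranges cleanly to $\gamma K (C - \alpha) \leq \alpha$, i.e. $\gamma \leq \frac{1}{K} \cdot \frac{\alpha}{C - \alpha}$; the second is implied by the stronger (hence sufficient) bound $\frac{1}{\gamma K} \geq \frac{\alpha}{\sqrt{1-\beta_2}}$, i.e. $\gamma \leq \frac{1}{K} \cdot \frac{\sqrt{1-\beta_2}}{\alpha}$, where the $+1$ appearing in $\eta_u$ is what lets us drop the exact threshold $\frac{\sqrt{1-\beta_2}}{\alpha - \sqrt{1-\beta_2}}$ in favor of the cleaner one. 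Taking $\gamma$ equal to the minimum of the two thresholds — precisely the stated value — satisfies both simultaneously, which is exactly the reason for the particular form of $\gamma$.

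The main (and essentially only) obstacle is this last algebraic verification: correctly identifying that both constraints bind at $t = K$ and confirming that the displayed $\gamma$ is a valid choice for the lower and upper clipping bounds at once. Once the clipping is shown to be an identity on $t \leq K$, the instance coincides with the Adam counterexample and the conclusion $\expec{}{f(\currw)} - f(x^\star) \geq 1$ follows exactly as in \thmref{the:sto-example} with $\delta = 1$.
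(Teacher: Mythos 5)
Your proposal is correct and follows essentially the same route as the paper's proof: use the monotonicity of $\eta_l(\cdot;K)$ and $\eta_u(\cdot;K)$ to reduce the containment to $t = K$, verify algebraically that the stated $\gamma$ gives $\eta_l(K;K) \leq \frac{\alpha}{C}$ and $\eta_u(K;K) \geq \frac{\alpha}{\sqrt{1-\beta_2}}$, and conclude that the clipping is the identity on $t \leq K$ so the argument of \thmref{the:sto-example} applies verbatim. The only (harmless) additions are your explicit check of the hypotheses on the bound functions and the remark about dropping the $+1$ in $\eta_u$, both of which the paper leaves implicit.
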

\begin{proof}
    Check that, for all $t \leq K$:
    \begin{equation}
    \begin{split}
        & \eta_l(t;K) \leq 1 - \frac{1}{\gamma K + 1} \leq 1 - \frac{1}{\frac{\alpha}{C-\alpha} + 1} =  1 - \frac{C-\alpha}{C} = \frac{\alpha}{C} \\
        & \eta_u(t;K) \geq 1 + \frac{1}{\gamma K} \geq 1 + \frac{\alpha}{\sqrt{1 - \btwo}} \geq \frac{\alpha}{\sqrt{1 - \btwo}}
    \end{split}
    \end{equation}
    Hence, for the stochastic problem in \thmref{the:sto-example}, we also have that $\eta_l(t;K) \leq \frac{\alpha}{\sqrt \currv} \leq \eta_u(t;K)$ for all $t \leq K$.
\end{proof}

Note that it is straightforward to prove a similar result for the online convex setting by invoking Theorem 2 instead of Theorem 3 of \cite{amsgrad} -- this would immediately imply that \thmref{thm:adaboundorig} is incorrect. Instead, \thmref{the:sto-example} was presented in the convex stochastic setup as it yields a stronger result, and it almost immediately implies that \thmref{thm:adaboundorig} might not hold:

\begin{cor}
\label{cor:incorrect}
There exists an instance where \thmref{thm:adaboundorig} does not hold.
\end{cor}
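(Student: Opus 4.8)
The plan is to convert the stochastic non-convergence of \thmref{the:sto-example} into a \emph{lower} bound on AdaBound's regret that eventually exceeds the upper bound claimed in \thmref{thm:adaboundorig}. I would instantiate the one-dimensional problem and bound functions of \thmref{the:sto-example} (feasible set $[-1,1]$, $f(\w)=\w$ so that $\w^*=-1$ and $f(\w^*)=-1$, obtained by setting $\delta=1$), and take $\bone=0$ (hence $\bonet=0$) with any $\btwo\in(0,1)$; this satisfies every hypothesis of \thmref{thm:adaboundorig}, and the reason for this particular choice of $\bone$ is explained at the end.

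The first step is a regret lower bound. Writing $R_K=\sum_{t=1}^K \fs(\currw)-\min_{\w\in\feas}\sum_{t=1}^K \fs(\w)$ and taking expectations, I would bound the in-hindsight comparator by $\expec{}{\min_{\w}\sum_t \fs(\w)}\leq \sum_t \expec{}{\fs(\w^*)}=K f(\w^*)$, and use that $\currw$ is independent of $\fs$ to get $\expec{}{\fs(\currw)}=\expec{}{f(\currw)}$. Combined with the per-step guarantee $\expec{}{f(\currw)}-f(\w^*)\geq 1$ for all $t\leq K$ from \thmref{the:sto-example}, this gives $\expec{}{R_K}\geq K$. Alternatively one can route this through the online-to-batch conversion of \cite{online2batch} applied to the averaged iterate, using linearity of $f$ to lower-bound its suboptimality by $1$.

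The second step is to evaluate the right-hand side of \thmref{thm:adaboundorig} at $T=K$ and show it is $O(\sqrt K)$ with constants independent of $K$. With $\bone=0$ the middle double sum vanishes; for the remaining terms I would use that, for $t\leq K$, the clipping is the identity, so $\hat{\currei}^{-1}=\sqrt{\currv}/\alpha\leq C/\alpha$, while $D_\infty=2$, $R_\infty=\eta_u(1)=\alpha/\sqrt{1-\btwo}$ and $G_2=C$ are all fixed. Hence the bound is at most $c\sqrt K$ for some $c=c(\alpha,\btwo,C)$, deterministically on every realization. Since \thmref{thm:adaboundorig} asserts that $R_K$ is at most this bound for every realized function sequence, it would imply $\expec{}{R_K}\leq c\sqrt K$; for $K$ large enough this contradicts $\expec{}{R_K}\geq K$, so the bound must fail on at least one realization, exhibiting the desired instance.

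The main obstacle is the middle term $\frac{D_\infty^2}{2(1-\bone)}\sum_{t=1}^K\sum_i \bonet\,\currei^{-1}$ of \thmref{thm:adaboundorig}: since $\currei^{-1}=\sqrt{t}\,\hat{\currei}^{-1}$ grows like $\sqrt t$, a \emph{constant} $\bonet=\bone>0$ would make this term $\Theta(K^{3/2})$, which dominates $K$ and destroys the contradiction. Taking $\bone=0$ (or any geometrically decaying $\bonet$) removes this obstruction while keeping the instance inside the hypotheses of both theorems; the one point needing care is that \thmref{the:sto-example} still applies here, which it does since its statement covers all $\bone\in[0,1)$ with $\bone<\sqrt{\btwo}$. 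The only remaining subtlety, the gap between the hindsight comparator and $\w^*$, is handled by the Jensen-type inequality used above.
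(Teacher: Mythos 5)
Your proposal is correct and follows essentially the same route as the paper's own proof: instantiate the problem and bound functions of \thmref{the:sto-example} with $\bone=0$, observe that the right-hand side of \thmref{thm:adaboundorig} is $O(\sqrt K)$ with constants independent of $K$ (using $\hat{\eta}_{K,i}^{-1}\leq C/\alpha$, $D_\infty=2$, $G_2=C$, $R_\infty=\alpha/\sqrt{1-\btwo}$), and contradict this in expectation via the per-step guarantee $\expec{}{f(\currw)}-f(\w^*)\geq 1$ for $t\leq K$. Your treatment is in fact slightly more careful than the paper's in two respects — you explicitly justify replacing the hindsight comparator by the fixed point $\w^*$ when taking expectations, and you explain why $\bone=0$ is needed (otherwise the middle term of the claimed bound grows like $K^{3/2}$) — but these are refinements of the same argument, not a different one.
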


\begin{proof}
    Consider AdaBound with the bound functions presented in \thmref{the:sto-example} and $\bone = 0$. For any sequence $\{\fs\}_{t=1}^K$ drawn for the stochastic problem in \thmref{the:sto-example}, setting $D_\infty = 2, G_2 = C$ and $R_\infty = \alpha / \sqrt{1 - \btwo}$ in \thmref{thm:adaboundorig} yields, for $T = K$:
    \begin{equation}
    \label{eq:wrongregret}
        R_K = \sum_{t=1}^K \left(\fs(\currw) - \fs(\w^*)\right) \leq \frac{2 d C \sqrt{K}}{\alpha} + (2\sqrt{K} - 1) \frac{\alpha C^2}{\sqrt{1 - \btwo}}
    \end{equation}
    where we used the fact that $\hat \eta_{K,i}^{-1} = \frac{\sqrt{ v_{K,i}} }{\alpha} \leq \frac{C}{\alpha}$. Pick $K$ large enough such that $R_K < \frac{K}{100}$. Taking expectation over sequences and dividing by $K$:
    \begin{equation}
    \label{eq:wrongregret}
        \frac{1}{K} \sum_{t=1}^K \expec{}{f(\currw)} - f(\w^*) < 0.01
    \end{equation}
    However, \thmref{the:sto-example} assures $\expec{}{f(\currw)} - f(\w^*) \geq 1$ for all $t \leq K$, raising a contradiction.
\end{proof}

Note that while the above result shows that \thmref{thm:adaboundorig} is indeed incorrect, it does \textit{not} imply that AdaBound might fail to converge.

\section{A New Guarantee}
\label{sec-convergence}

The results in the previous section suggest that \thmref{thm:adaboundorig} fails to capture all relevant properties of the bound functions. Although it is indeed possible to show that $R_T / T \to 0$, it is not clear whether a $O(\sqrt T)$ regret rate can be guaranteed for general bound functions.

It turns out that replacing the previous requirements on the bound functions by the assumption that $\frac{t}{\eta_l(t)} - \frac{t-1}{\eta_u(t-1)} \leq M$ for all $t$ suffices to guarantee a regret of $O(\sqrt T)$:

\begin{thm}
\label{the:adabound-corrected}
Let $\{x_t\}$ and $\{v_t\}$ be the sequences obtained from Algorithm~\ref{alg:adabound}, $\beta_1 = \beta_{11}$, $\beta_{1t} \leq \beta_1$ for all $t \in [T]$ and $\beta_1 / \sqrt{\beta_2} < 1$. Suppose $\eta_u(t) \leq R_\infty$ and $\frac{t}{\eta_l(t)} - \frac{t-1}{\eta_u(t-1)} \leq M$ for all $t \in [T]$.
Assume that $\|x - y\|_\infty \leq D_\infty$ for all $x, y \in \mathcal{F}$ and $\|\nabla f_t(x)\| \leq G_2$ for all $t \in [T]$ and $x \in \mathcal{F}$. For $x_t$ generated using the \textsc{AdaBound} algorithm, we have the following bound on the regret
\begin{equation}
    R_T \leq \frac{D_\infty^2}{2(1-\bone)} \Bigg[ 2d M (\sqrt T - 1) + \sum_{i=1}^d \bigg[ \eta_{1,i}^{-1} + \sum_{t = 1}^{T} \bonet \eta_{t,i}^{-1} \bigg] \Bigg] + (2\sqrt{T} - 1) \frac{R_\infty G_2^2}{1-\beta_{1}}
\end{equation}
\end{thm}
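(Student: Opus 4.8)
The plan is to run the regret analysis of \cite{amsgrad} --- AdaBound's iteration is an Adam-style update followed by the weighted projection $\Pi_{\feas,\diag(\eta_t^{-1})}$, so that template transfers directly --- and to replace the one step where the original argument invoked the (false) monotonicity $\eta_{t,i}^{-1}\ge\eta_{t-1,i}^{-1}$ by a step driven by the new hypothesis. First I would linearize via convexity,
\[
\reg=\sum_{t=1}^T\big(\fs(\currw)-\fs(x^*)\big)\le\sum_{t=1}^T\dotp{\currg}{\currw-x^*},
\]
and then expand the squared distance to $x^*$ in the norm $\normed{\cdot}_{\diag(\eta_t^{-1})}$. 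Nonexpansiveness of the projection applied to $\nextw=\Pi_{\feas,\diag(\eta_t^{-1})}(\currw-\curre\odot\currm)$ yields, after rearranging,
\[
\dotp{\currm}{\currw-x^*}\le\tfrac12\Big(\normed{\currw-x^*}_{\diag(\eta_t^{-1})}^2-\normed{\nextw-x^*}_{\diag(\eta_t^{-1})}^2\Big)+\tfrac12\sum_{i=1}^d\eta_{t,i}m_{t,i}^2.
\]

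Next I would pass from $\currm$ to $\currg$ through $\currm=\bonet\prevm+(1-\bonet)\currg$, which introduces the factor $\tfrac1{1-\bonet}$, and control the resulting cross term $\bonet\dotp{\prevm}{\currw-x^*}$ by Young's inequality in the same weighted norm. This bookkeeping is mechanically identical to \cite{amsgrad}: using $\bonet\le\bone$ it produces the momentum penalty $\tfrac{1}{2(1-\bone)}\sum_i\sum_t\bonet\eta_{t,i}^{-1}(x_{t,i}-x^*_i)^2$ (bounded by $\tfrac{D_\infty^2}{2(1-\bone)}\sum_i\sum_t\bonet\eta_{t,i}^{-1}$) and two energy contributions $\sum_i\eta_{t,i}m_{t,i}^2$ and $\sum_i\eta_{t,i}m_{t-1,i}^2$, while leaving the telescoping quadratic term with the constant coefficient $\tfrac{1}{2(1-\bone)}$. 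I would import this part wholesale.

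The crux is summing the telescoping quadratic term,
\[
\frac{1}{2(1-\bone)}\sum_{i=1}^d\sum_{t=1}^T\eta_{t,i}^{-1}\Big((x_{t,i}-x^*_i)^2-(x_{t+1,i}-x^*_i)^2\Big).
\]
Because the weight $\eta_{t,i}^{-1}$ varies with $t$, the sum does \emph{not} collapse; reorganizing by summation by parts and discarding the nonpositive endpoint leaves the initial term $\eta_{1,i}^{-1}(x_{1,i}-x^*_i)^2$ together with $\sum_{t=2}^T(x_{t,i}-x^*_i)^2\big(\eta_{t,i}^{-1}-\eta_{t-1,i}^{-1}\big)$. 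The original proof bounded the latter using $\eta_{t,i}^{-1}\ge\eta_{t-1,i}^{-1}$, which \secref{sec-slow} refutes. Instead I would bound $(x_{t,i}-x^*_i)^2\le D_\infty^2$ and, writing $\eta_{t,i}=\hat\eta_{t,i}/\sqrt t$ with $\eta_l(t)\le\hat\eta_{t,i}\le\eta_u(t)$, establish
\[
\eta_{t,i}^{-1}-\eta_{t-1,i}^{-1}\le\frac{\sqrt t}{\eta_l(t)}-\frac{\sqrt{t-1}}{\eta_u(t-1)}\le\frac1{\sqrt t}\Big(\frac{t}{\eta_l(t)}-\frac{t-1}{\eta_u(t-1)}\Big)\le\frac{M}{\sqrt t},
\]
where the middle inequality uses $\tfrac{t-1}{\sqrt t}\le\sqrt{t-1}$ and the last is exactly the new hypothesis. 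Hence each summand is at most $D_\infty^2 M/\sqrt t$ (trivially so when the increment is negative), and $\sum_{t=2}^T M/\sqrt t\le 2M(\sqrt T-1)$; summing over the $d$ coordinates produces the $2dM(\sqrt T-1)$ term that replaces the endpoint $\sqrt T\sum_i\hat\eta_{T,i}^{-1}$ of the original (incorrect) bound.

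Finally I would bound the energy contributions by $\eta_{t,i}\le\eta_u(t)/\sqrt t\le R_\infty/\sqrt t$ (this is where $\eta_u(t)\le R_\infty$ enters) together with $\normed{\currm}\le G_2$, since each $\currm$ is a convex combination of gradients of norm at most $G_2$; the two energy terms then sum to at most $(2\sqrt T-1)\tfrac{R_\infty G_2^2}{1-\bone}$ after absorbing the $\tfrac1{1-\bone}$ factors and using $\sum_t 1/\sqrt t\le 2\sqrt T-1$. Collecting the telescoping term, the momentum penalty, and the energy term, and factoring $\tfrac{D_\infty^2}{2(1-\bone)}$ out of the first two, reproduces the stated inequality. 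I expect the only genuinely delicate point to be the third step --- showing the time-varying weights can be absorbed through the new assumption rather than through monotonicity --- while the momentum bookkeeping and the energy bound are routine transcriptions of the AMSGrad argument.
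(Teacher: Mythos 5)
Your proposal follows the same route as the paper's proof. The paper imports the intermediate bound (its Lemma~\ref{lem:adabound-intermediate}, taken from the original AdaBound analysis up to its Equation~6) rather than re-deriving it, but that intermediate bound is exactly the output of the AMSGrad-style bookkeeping you describe: the weighted telescoping quadratic term, the momentum penalty $S_2$, and the energy term $(2\sqrt T - 1)\frac{R_\infty G_2^2}{1-\beta_1}$. Your treatment of the crux --- summation by parts on the weighted telescoping sum, discarding the nonpositive endpoint, the chain $\eta_{t,i}^{-1}-\eta_{t-1,i}^{-1}\le \frac{\sqrt t}{\eta_l(t)}-\frac{\sqrt{t-1}}{\eta_u(t-1)}\le\frac1{\sqrt t}\left(\frac{t}{\eta_l(t)}-\frac{t-1}{\eta_u(t-1)}\right)\le\frac{M}{\sqrt t}$, and $\sum_{t=2}^T t^{-1/2}\le 2(\sqrt T-1)$ --- is exactly the paper's argument, including the observation that the bound holds trivially when the increment is negative.

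One detail in your write-up is not justified as stated: you claim the momentum bookkeeping leaves the telescoping term with the \emph{constant} coefficient $\frac{1}{2(1-\beta_1)}$, ``using $\beta_{1t}\le\beta_1$''. The per-step inequality actually produces the time-varying coefficient $\frac{1}{2(1-\beta_{1t})}$, and since the telescoping differences $\|\eta_t^{-1/2}\odot(x_t-x^*)\|^2-\|\eta_t^{-1/2}\odot(x_{t+1}-x^*)\|^2$ are not sign-definite, you cannot replace $\frac{1}{1-\beta_{1t}}$ by $\frac{1}{1-\beta_1}$ term by term. The correct order, which the paper follows, is: keep the varying coefficients, perform the summation by parts, use the monotonicity $\beta_{1t}\le\beta_{1(t-1)}$ (together with $\eta_{t,i}^{-1}>0$) to merge the two coefficients inside each difference, and only then bound $(x_{t,i}-x_i^*)^2/\bigl(2(1-\beta_{1(t-1)})\bigr)$ by $D_\infty^2/\bigl(2(1-\beta_1)\bigr)$ against the nonnegative quantity $M/\sqrt t$. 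This is a local, fixable slip --- though note it requires $\beta_{1t}$ to be non-increasing, an assumption the paper's proof also invokes without stating it in the theorem --- and with that reordering your argument coincides with the paper's.
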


\begin{proof}
We start from an intermediate result of the original proof of Theorem 4 of \cite{adabound}:

\begin{lem}
\label{lem:adabound-intermediate}
    For the setting in Theorem \ref{the:adabound-corrected}, we have:
\begin{equation}
\label{eq:lemma-bound}
\begin{split}
    R_T &\leq \underbrace{\sum_{t = 1}^{T} \frac{1}{2(1-\beta_{1t})}\bigg[ \|\eta_t^{-1/2} \odot (x_t - x^*)\|^2 - \|\eta_t^{-1/2} \odot (x_{t+1} - x^*)\|^2 \bigg]}_{S_1} \\& \quad\quad + \underbrace{\sum_{t = 1}^{T} \frac{\beta_{1t}}{2(1-\beta_{1t})} \|\eta_t^{-1/2} \odot (x_t - x^*)\|^2}_{S_2} + (2\sqrt{T} - 1) \frac{R_\infty G_2^2}{1-\beta_{1}}
\end{split}
\end{equation}
\end{lem}

\begin{proof}
    The result follows from the proof of Theorem 4 in \cite{adabound}, up to (but not including) Equation 6.
\end{proof}

We will proceed to bound $S_1$ and $S_2$ from the above Lemma. Starting with $S_1$:
\begin{equation}
\label{eq:s1-bound}
\begin{split}
    S_1 & = \sum_{i=1}^d \sum_{t = 1}^{T} \frac{1}{2(1-\bonet)} \Bigg[ \eta_{t,i}^{-1} (x_{t,i} - x_i^*)^2 - \eta_{t,i}^{-1} (x_{t+1,i} - x_i^*)^2  \Bigg] \\
    & \leq \sum_{i=1}^d \Bigg[ \frac{\eta_{1,i}^{-1} (x_{1,i} - x_i^*)^2}{2(1-\beta_{11})} + \sum_{t = 2}^{T}  \left[ \frac{\eta_{t,i}^{-1}}{2(1-\beta_{1t})} - \frac{\eta_{t-1,i}^{-1}}{2(1-\beta_{1(t-1)})} \right] (x_{t,i} - x_i^*)^2   \Bigg] \\
    & \leq \sum_{i=1}^d \Bigg[ \frac{\eta_{1,i}^{-1} (x_{1,i} - x_i^*)^2}{2(1-\beta_{11})} + \sum_{t = 2}^{T}  \frac{\left[ \eta_{t,i}^{-1} - \eta_{t-1,i}^{-1} \right]}{2(1-\beta_{1(t-1)})} (x_{t,i} - x_i^*)^2 \Bigg] \\
    & \leq \sum_{i=1}^d \Bigg[ \frac{\eta_{1,i}^{-1} (x_{1,i} - x_i^*)^2}{2(1-\beta_{11})} + \sum_{t = 2}^{T}  \left[ \frac{\sqrt{t}}{\eta_l(t)} - \frac{\sqrt{t-1}}{\eta_u(t-1)} \right] \frac{(x_{t,i} - x_i^*)^2}{2(1-\beta_{1(t-1)})} \Bigg] \\
    & \leq \sum_{i=1}^d \Bigg[ \frac{\eta_{1,i}^{-1} (x_{1,i} - x_i^*)^2}{2(1-\beta_{11})} + \sum_{t = 2}^{T}  \frac{1}{\sqrt t} \left[ \frac{t}{\eta_l(t)} - \frac{t-1}{\eta_u(t-1)} \right] \frac{(x_{t,i} - x_i^*)^2}{2(1-\beta_{1(t-1)})} \Bigg] \\
    & \leq \sum_{i=1}^d \Bigg[ \frac{\eta_{1,i}^{-1} (x_{1,i} - x_i^*)^2}{2(1-\beta_{11})} + \sum_{t = 2}^{T} \frac{M}{\sqrt t} \cdot  \frac{(x_{t,i} - x_i^*)^2}{2(1-\beta_{1(t-1)})} \Bigg] \\
    & \leq  \frac{D_\infty^2}{2(1-\bone)} \sum_{i=1}^d \Bigg[ \eta_{1,i}^{-1} + 2 M (\sqrt T - 1) \Bigg] \\
    & = \frac{D_\infty^2}{2(1-\bone)} \Bigg[ 2dM (\sqrt T - 1) + \sum_{i=1}^d \eta_{1,i}^{-1} \Bigg] \\
\end{split}
\end{equation}

In the second inequality we used $0 \leq \beta_{1,(t+1)} \leq \beta_{1,t} < 1$, in the third the definition of $\eta_t$ along with the fact that $\eta_{t,i} \in [\eta_l(t), \eta_u(t)]$ for all $i \in [d]$ and $t \in [T]$, in the fifth the assumption that $\frac{t}{\eta_l(t)} - \frac{t-1}{\eta_u(t-1)} \leq M$, and in the sixth we used the bound $D_\infty$ on the feasible region, along with $\sum_{t=2}^T t^{-1/2} \leq 2 \sqrt T - 2$ and $\bonet \leq \bone$ for all $t$.

For $S_2$, we have:
\begin{equation}
\label{eq:s2-bound}
\begin{split}
    S_2 & = \sum_{i=1}^d \sum_{t = 1}^{T} \frac{1}{2(1-\bonet)} \bonet \eta_{t,i}^{-1} (x_{t,i} - x_i^*)^2  \leq \frac{D_\infty^2}{2(1-\bone)} \sum_{i=1}^d \sum_{t = 1}^{T}  \bonet \eta_{t,i}^{-1} \\
\end{split}
\end{equation}
where we used the $D_\infty$ bound on the feasible region, and the fact that $\bonet \leq \bone$ for all $t$.

Combining \eqref{eq:s1-bound} and \eqref{eq:s2-bound} into \eqref{eq:lemma-bound}, we get:
\begin{equation}
\begin{split}
    R_T & \leq \frac{D_\infty^2}{2(1-\bone)} \Bigg[ 2d M (\sqrt T - 1) + \sum_{i=1}^d \bigg[ \eta_{1,i}^{-1} + \sum_{t = 1}^{T} \bonet \eta_{t,i}^{-1} \bigg] \Bigg] + (2\sqrt{T} - 1) \frac{R_\infty G_2^2}{1-\beta_{1}}
\end{split}
\end{equation}
\end{proof}

The above regret guarantee is similar to the one in Theorem 4 of \cite{adabound}, except for the term $M$ which accounts for assumption introduced. Note that \thmref{the:adabound-corrected} does \textit{not} require $\eta_u(t)$ to be non-increasing, $\eta_l(t)$ to be non-decreasing, nor that $\lim_{t \to \infty} \eta_u(t) = \lim_{t \to \infty} \eta_l(t)$.

It is easy to see that the assumption indeed holds for the bound functions in \cite{adabound}:

\begin{prop}
\label{prop:bound}
    For the bound functions
    \begin{equation}
        \eta_l(t) = 1 - \frac{1}{\gamma t + 1}
        \quad\quad\quad
        \eta_u(t) = 1 + \frac{1}{\gamma t}
    \end{equation}
    if $\gamma > 0$, we have:
    \begin{equation}
        \frac{t}{\eta_l(t)} - \frac{t-1}{\eta_u(t-1)} \leq 3 + 2 \gamma^{-1}
    \end{equation}
\end{prop}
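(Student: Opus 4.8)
The plan is to reduce the difference to a single manifestly bounded fraction by computing both reciprocals in closed form. First I would rewrite the lower bound function as $\eta_l(t) = 1 - \frac{1}{\gamma t + 1} = \frac{\gamma t}{\gamma t + 1}$, so that its reciprocal is exactly $\frac{1}{\eta_l(t)} = 1 + \frac{1}{\gamma t}$ and hence $\frac{t}{\eta_l(t)} = t + \gamma^{-1}$. This eliminates the first term cleanly and already contributes one of the $\gamma^{-1}$ factors appearing in the target bound.

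Next I would handle the subtracted term. Writing $\eta_u(t-1) = 1 + \frac{1}{\gamma(t-1)} = \frac{\gamma(t-1)+1}{\gamma(t-1)}$, multiplying its reciprocal by $(t-1)$ gives $\frac{t-1}{\eta_u(t-1)} = \frac{\gamma(t-1)^2}{\gamma(t-1)+1}$. The key algebraic step is the identity $\frac{\gamma u^2}{\gamma u + 1} = u - \frac{u}{\gamma u + 1}$ (with $u = t-1$), which isolates the integer part $(t-1)$ from a bounded remainder.

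Substituting both expressions, the difference becomes
\[
\frac{t}{\eta_l(t)} - \frac{t-1}{\eta_u(t-1)} = \left(t + \gamma^{-1}\right) - \left[(t-1) - \frac{t-1}{\gamma(t-1)+1}\right] = 1 + \gamma^{-1} + \frac{t-1}{\gamma(t-1)+1}.
\]
The $t$ and $(t-1)$ telescope, leaving only $1 + \gamma^{-1}$ plus the residual fraction. Since $\frac{u}{\gamma u + 1} \leq \frac{u}{\gamma u} = \gamma^{-1}$ for every $u > 0$, the residual is at most $\gamma^{-1}$, yielding the bound $1 + 2\gamma^{-1}$, which is in turn at most the claimed $3 + 2\gamma^{-1}$ (the extra slack of $2$ is not needed but keeps the statement robust).

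I do not expect a genuine obstacle here; the argument is elementary algebra once the reciprocals are written in closed form, with $\gamma > 0$ used precisely to keep $\eta_l(t) > 0$ and to validate the division step. The only point requiring care is the boundary case $t = 1$, where $\eta_u(0)$ is undefined: there the subtracted term $\frac{t-1}{\eta_u(t-1)}$ has numerator $0$ and is taken to vanish, so the difference equals $\frac{1}{\eta_l(1)} = 1 + \gamma^{-1}$, which again satisfies the bound.
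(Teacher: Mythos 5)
Your proof is correct, and it takes a genuinely different route from the paper's. The paper decomposes the difference as $t\left(\frac{1}{\eta_l(t)} - \frac{1}{\eta_u(t-1)}\right) + \frac{1}{\eta_u(t-1)}$, bounds the trailing term by $1$, collapses the bracketed difference into the single fraction $\frac{1}{\gamma}\cdot\frac{2\gamma t + 1 - \gamma}{\gamma(t-1)+1}$, and then applies two further inequalities (notably $\frac{\gamma t + 1}{\gamma(t-1)+1} \leq 1 + \gamma$) to land exactly on $3 + 2\gamma^{-1}$. You instead evaluate both terms in closed form, using $\frac{t}{\eta_l(t)} = t + \gamma^{-1}$ exactly and the identity $\frac{t-1}{\eta_u(t-1)} = (t-1) - \frac{t-1}{\gamma(t-1)+1}$, so the difference telescopes to the exact expression $1 + \gamma^{-1} + \frac{t-1}{\gamma(t-1)+1}$. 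This buys you two things the paper's argument does not give: a tighter constant, since your bound is $1 + 2\gamma^{-1} \leq 3 + 2\gamma^{-1}$ (which, if propagated, would slightly sharpen the constant in Corollary~\ref{cor:AdaBound}), and an explicit treatment of the boundary case $t = 1$, where $\eta_u(0)$ is degenerate — a case the paper's proof silently glosses over via the step $\frac{1}{\eta_u(t-1)} \leq 1$ (harmless in context, since Theorem~\ref{the:adabound-corrected} only invokes the proposition for $t \geq 2$, but your version is cleaner on this point). The two proofs use the same ingredients ($\gamma > 0$ and elementary algebra); yours simply organizes them so that the bound falls out of an identity rather than a chain of estimates.
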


\begin{proof}

First, check that $\eta_l(t) = \frac{\gamma t}{\gamma t + 1}$ and $\eta_u(t) = \frac{\gamma t + 1}{\gamma t}$. Then, we have:
\begin{equation}
\begin{split}
    \frac{t}{\eta_l(t)} - \frac{t-1}{\eta_u(t-1)} & = t \left( \frac{1}{\eta_l(t)} - \frac{1}{\eta_u(t-1)} \right) + \frac{1}{\eta_u(t-1)} \\
    & \leq t \left( \frac{1}{\eta_l(t)} - \frac{1}{\eta_u(t-1)} \right) + 1 \\
    & = t \left( \frac{\gamma t + 1}{\gamma t} - \frac{\gamma(t-1)}{\gamma(t-1) + 1} \right) + 1 \\
    & = \frac{1}{\gamma} \left( \frac{2 \gamma t + 1 - \gamma}{\gamma(t-1) + 1} \right) + 1 \\
    & \leq \frac{2}{\gamma} \left( \frac{\gamma t + 1}{\gamma(t-1) + 1} \right) + 1 \\
    & \leq \frac{2}{\gamma} \left(1 + \gamma \right) + 1 \\
    & = 3 + 2 \gamma^{-1} \\
\end{split}
\end{equation}
In the first inequality we used $\eta_u(t)^{-1} \leq 1$ for all $t$, and in the last the fact that $\frac{\gamma t + 1}{\gamma(t-1) + 1} \leq 1 + \gamma$ for all $\gamma$ and $t \geq 1$, which is equivalent to $\gamma^2 (t-1) \geq 0$.
\end{proof}

With this in hand, we have the following regret bound for AdaBound:

\begin{cor}
\label{cor:AdaBound}
Suppose $\beta_{1t} = \beta_1/t$, $\eta_l(t) = 1 - \frac{1}{\gamma t+1}$, and $\eta_u(t) = 1 + \frac{1}{\gamma t}$ for $\gamma > 0$ in Theorem~\ref{the:adabound-corrected}. Then, we have:
\begin{equation}
    R_T \leq \frac{5 \sqrt T}{1 - \bone} \left(1 +  \gamma^{-1}\right) \left(d D_\infty^2 + G_2^2\right)
\end{equation}
\end{cor}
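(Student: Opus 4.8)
The plan is to substitute the prescribed schedule $\bonet = \bone/t$ and the bound functions directly into the regret bound of \thmref{the:adabound-corrected} and verify that every term is of order $\sqrt T\,(1 + \gamma^{-1})$. Two facts are imported from earlier: \propref{prop:bound} shows that the hypothesis $\frac{t}{\eta_l(t)} - \frac{t-1}{\eta_u(t-1)} \leq M$ holds with $M = 3 + 2\gamma^{-1}$; and since $\eta_u(t) = 1 + \frac{1}{\gamma t}$ is decreasing in $t$, the requirement $\eta_u(t) \leq R_\infty$ is met with $R_\infty = \eta_u(1) = 1 + \gamma^{-1}$.

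The single quantity I need to control is $\eta_{t,i}^{-1}$. From line~6 of \algoref{alg:adabound}, $\curre = \hat\eta_t/\sqrt t$ with $\hat\eta_{t,i} \in [\eta_l(t), \eta_u(t)]$, so $\eta_{t,i}^{-1} \leq \sqrt t/\eta_l(t)$. Writing $\eta_l(t) = \frac{\gamma t}{\gamma t + 1}$ gives $\eta_l(t)^{-1} = 1 + \frac{1}{\gamma t}$, hence $\eta_{t,i}^{-1} \leq \sqrt t + \frac{1}{\gamma\sqrt t}$; in particular $\eta_{1,i}^{-1} \leq 1 + \gamma^{-1}$.

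With this I would bound the bracketed $D_\infty^2$ contributions of \thmref{the:adabound-corrected} term by term. The leading term obeys $2dM(\sqrt T - 1) \leq 6d(1+\gamma^{-1})\sqrt T$ using $M \leq 3(1+\gamma^{-1})$. The initialization term gives $\sum_{i=1}^d \eta_{1,i}^{-1} \leq d(1+\gamma^{-1}) \leq d(1+\gamma^{-1})\sqrt T$. For the momentum term I would use $\bonet = \bone/t$ and the bound above to write $\sum_{t=1}^T \bonet \eta_{t,i}^{-1} \leq \bone\big(\sum_{t=1}^T t^{-1/2} + \gamma^{-1}\sum_{t=1}^T t^{-3/2}\big)$, then apply $\sum_{t=1}^T t^{-1/2} \leq 2\sqrt T$ and the convergent tail $\sum_{t=1}^T t^{-3/2} \leq 3$ together with $\bone < 1$ to obtain $\sum_{t=1}^T \bonet \eta_{t,i}^{-1} \leq (2 + 3\gamma^{-1})\sqrt T \leq 3(1+\gamma^{-1})\sqrt T$. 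Summing the three pieces, the bracket is at most $10\,d(1+\gamma^{-1})\sqrt T$, so the whole $D_\infty^2$ block is at most $\frac{5 d D_\infty^2 (1+\gamma^{-1})\sqrt T}{1-\bone}$. Finally the gradient term is $(2\sqrt T - 1)\frac{R_\infty G_2^2}{1-\bone} \leq \frac{2\sqrt T (1+\gamma^{-1}) G_2^2}{1-\bone}$, itself at most $\frac{5\sqrt T(1+\gamma^{-1})G_2^2}{1-\bone}$; adding the two blocks yields the claim.

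The argument is essentially bookkeeping, so the only real care lies in the constants: one must verify that $\sum_t t^{-3/2}$ stays bounded (not merely $O(\sqrt T)$), that each stray constant (the $M$ from \propref{prop:bound}, the $2 + 3\gamma^{-1}$ from the momentum sum, and the $2\sqrt T - 1$ prefactor) can be absorbed into a clean factor $(1+\gamma^{-1})$, and that the resulting coefficients ($6 + 1 + 3 = 10$ inside the bracket, halved to $5$, against $2 \leq 5$ for the gradient term) all collapse to the single constant $5$. That constant-chasing is the most error-prone step, but no genuine difficulty arises.
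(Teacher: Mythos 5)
Your proof is correct and follows essentially the same route as the paper: plug \propref{prop:bound} and $R_\infty = \eta_u(1) = 1+\gamma^{-1}$ into \thmref{the:adabound-corrected}, bound $\eta_{t,i}^{-1}$ through $\sqrt{t}/\eta_l(t)$, and collect constants to reach the factor $5$. The only (harmless) deviation is that you use the exact bound $\eta_{t,i}^{-1} \leq \sqrt{t} + \frac{1}{\gamma\sqrt{t}}$ and a convergent $\sum_t t^{-3/2} \leq 3$ tail, whereas the paper uses the cruder $\eta_{t,i}^{-1} \leq \sqrt{t}\,\eta_l(1)^{-1} = \sqrt{t}\,(1+\gamma^{-1})$; both yield valid bookkeeping ending in the stated bound.
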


\begin{proof}

From the bound in Theorem \ref{the:adabound-corrected}, it follows that:
\begin{equation}
\label{eq:final-bound}
\begin{split}
    R_T & \leq \frac{D_\infty^2}{2(1-\bone)} \Bigg[ 2d M (\sqrt T - 1) + (1+\gamma^{-1}) \sum_{i=1}^d \bigg[ 1 + \bone \sum_{t = 1}^{T} \frac{\sqrt{t}}{t} \bigg] \Bigg] + (2\sqrt{T} - 1) \frac{(1 + \gamma^{-1}) G_2^2}{1-\beta_{1}} \\
    & \leq \frac{D_\infty^2}{2(1-\bone)} \Bigg[ 2d M(\sqrt T - 1) + d (1+\gamma^{-1}) \bigg[ 1 + \bone (2 \sqrt T - 1) \bigg] \Bigg] + (2\sqrt{T} - 1) \frac{(1 + \gamma^{-1}) G_2^2}{1-\beta_{1}} \\
    & \leq \frac{d D_\infty^2}{2(1-\bone)} \Bigg[2 \sqrt T \left(M + 1 + \gamma^{-1} \right) + 1 + \gamma^{-1} \Bigg] + 2\sqrt{T} \frac{(1 + \gamma^{-1}) G_2^2}{1-\beta_{1}} \\
    & \leq \frac{d D_\infty^2}{2(1-\bone)} \Bigg[2 \sqrt T \left(4 + 3 \gamma^{-1}\right) + 1 + \gamma^{-1} \Bigg] + 2\sqrt{T} \frac{(1 + \gamma^{-1}) G_2^2}{1-\beta_{1}} \\
    & \leq \frac{5 \sqrt T}{1 - \bone} \left(1 +  \gamma^{-1}\right) \left(d D_\infty^2 + G_2^2\right)
\end{split}
\end{equation}
In the first inequality we used the facts that $\currei^{-1} = \sqrt t \hat \eta_{t,i}^{-1} \leq \sqrt t \eta_l(1)^{-1} = \sqrt t (1 + \gamma^{-1})$, and that $R_\infty = \eta_u(1) = 1 + \gamma^{-1}$. In the second, that $\sum_{t=1}^T \frac{1}{\sqrt t} \leq 2 \sqrt T - 1$. In the third, that $\bone < 1$. In the fourth, we used the bound on $M \leq 3 + 2 \gamma^{-1}$ from Proposition \ref{prop:bound}.
\end{proof}

It is easy to check that the previous results also hold for AMSBound (Algorithm 3 in \cite{adabound}), since no assumptions were made on the point-wise behavior of $\currv$.

\begin{rmk}
    \thmref{the:adabound-corrected} and Corollary \ref{cor:AdaBound} also hold for AMSBound.
\end{rmk}
\section{Experiments on AdaBound and SGD}
\label{sec-experiments}

Unfortunately, the regret bound in Corollary \ref{cor:AdaBound} is minimized in the limit $\gamma \to \infty$, where AdaBound immediately degenerates to SGD. To inspect whether this fact has empirical value or is just an artifact of the presented analysis, we evaluate the performance of AdaBound when training neural networks on the CIFAR dataset \citep{cifar} with an extremely small value for $\gamma$ parameter.

Note that $\gamma = 0.001$ was used for the CIFAR results in \cite{adabound}, for which we have $\eta_l(t) > 0.5$ and $\eta_u(t) < 2$ after only 3 epochs ($391$ iterations per epoch for a batch size of $128$), hence we believe results with considerably smaller/larger values for $\gamma$ are required to understand its impact on the performance of AdaBound.

We trained a Wide ResNet-28-2 \citep{wide} using the same settings in \cite{adabound} and its released code \footnote{\url{https://github.com/Luolc/AdaBound}, version 2e928c3}: $\bone = 0.9, \btwo = 0.999$, a weight decay of $0.0005$, a learning rate decay of factor 10 at epoch 150, and batch size of $128$. For AdaBound, we used the author's implementation with $\alpha = 0.001, \alpha^\star = 0.1$, and for SGD we used $\alpha = 0.1$. Experiments were done in PyTorch.

To clarify our network choice, note that the model used in \cite{adabound} is \textit{not} a ResNet-34 from \cite{resnet1}, but a variant used in \cite{cutout}, often referred as ResNet-34. In particular, the ResNet-34 from \cite{resnet1} consists of 3 stages and less than 0.5M parameters, while the network used in \cite{adabound} has 4 stages and around 21M parameters. The network we used has roughly 1.5M parameters.

Our preliminary results suggest that the final test performance of AdaBound is monotonically increasing with $\gamma$ -- more interestingly, there is no significant difference throughout training between $\gamma = 0.001$ and $\gamma = 100$ (for the latter, we have $\eta_l(1) \geq 0.99$ and $\eta_u(1) \leq 1.01$).

To see why AdaBound with $\gamma = 100$ behaves so differently than SGDM, check that the momentum updates slightly differ between the two: for AdaBound, we have:
\begin{equation}
    \currm = \bone \prevm + (1-\bone) \currg
\end{equation}
while, for the implementation of SGDM used in \cite{adabound}, we have:
\begin{equation}
\label{eq:dampening}
    \currm = \bone \prevm + (1 - \kappa) \currg
\end{equation}
where $\kappa \in [0,1)$ is the dampening factor. The results in \cite{adabound} use $\kappa = 0$, which can cause $\currm$ to be larger by a factor of $\frac{1}{1-\bone} = 10$ compared to AdaBound. In principle, setting $\kappa = \bone$ in SGDM should yield dynamics similar to AdaBound's as long as $\gamma$ is not extremely small.

Figure \ref{fig-curves} presents our main empirical results: setting $\gamma = 10^{-10}$ causes noticeable performance degradation compared to $\gamma = 0.001$ in AdaBound, as Corollary \ref{cor:AdaBound} might suggest. Moreover, setting $\kappa = \bone$ in SGDM causes a dramatic performance increase throughout training. In particular, it slightly outperforms AdaBound in terms of final test accuracy ($94.01\%$ against $93.90\%$, average over 5 runs), while being comparably fast and consistent in terms of progress during optimization.

We believe SGDM with $\kappa = \bone$ (which is currently \textit{not} the default in either PyTorch or Tensorflow) might be a reasonable alternative to adaptive gradient methods in some settings, as it also requires less computational resources: AdaBound, Adam and SGDM's updates cost $11d$, $9d$ and $5d$ float operations, respectively, and their memory costs are $3d$, $3d$ and $2d$. Moreover, AdaBound has 5 hyperparameters ($\alpha, \alpha^*, \bone, \btwo, \gamma$), while SGDM with $\kappa = \bone$ has only 2 ($\alpha, \bone$). Studying the effectiveness of `dampened' SGDM, however, requires extensive experiments which are out of the scope of this technical report.

Lastly, we evaluated whether performing bias correction on the $\kappa = \bone$ form of SGDM affects its performance. More specifically, we divide the learning rate at step $t$ by a factor of $1 - \bone^t$. We observed that bias correction has no significant effect on the average performance, but yields smaller variance: the standard deviation of the final test accuracy over 5 runs decreased from $0.21$ to $0.16$ when adding bias correction to $\kappa = \bone$ SGDM (compared to $0.27$ of AdaBound).

\begin{figure}
    \centering
    \begin{subfigure}{.5\textwidth}
      \centering
      \includegraphics[width=\linewidth]{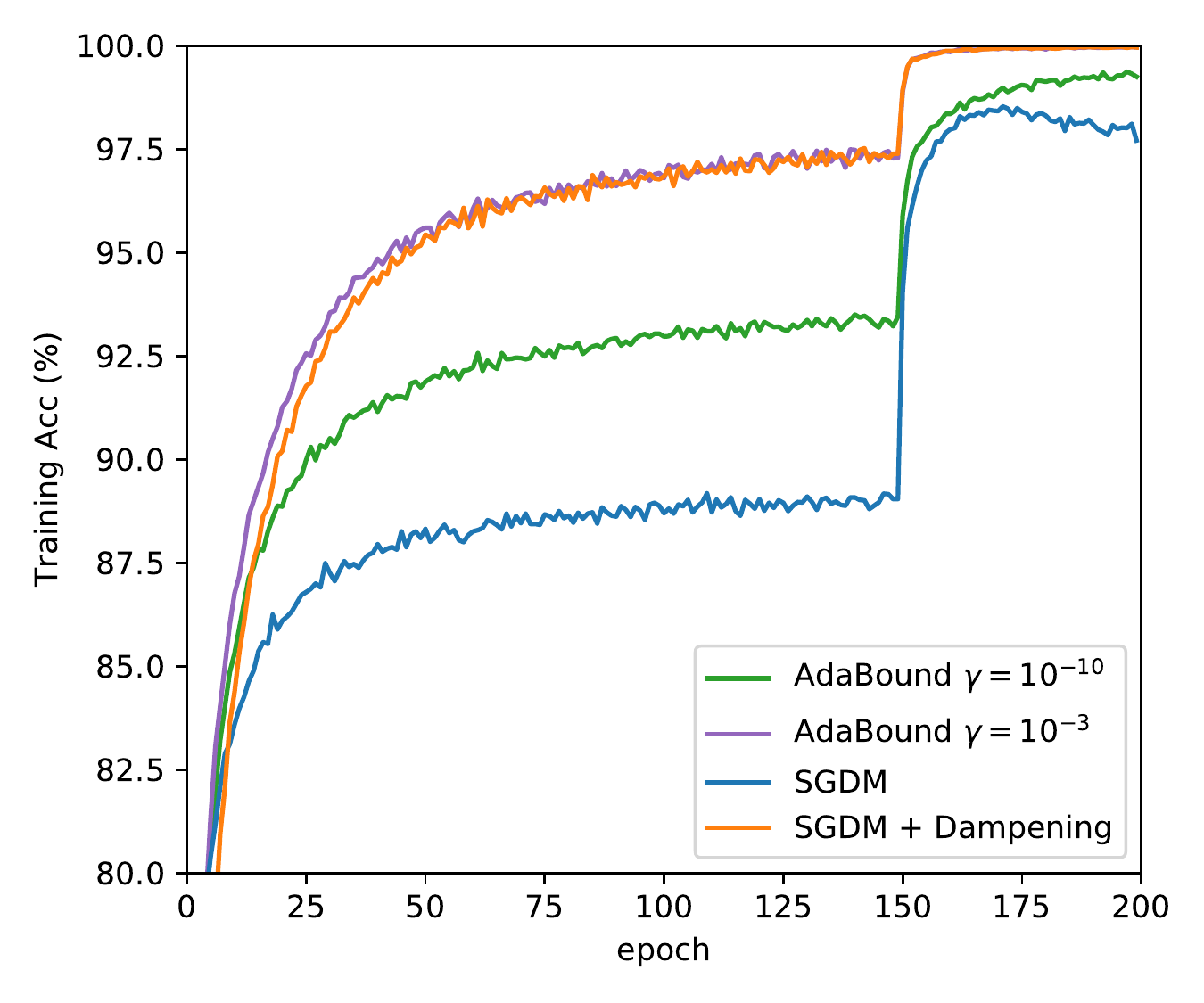}
    \end{subfigure}%
    \hfill
    \begin{subfigure}{.5\textwidth}
      \centering
      \includegraphics[width=\linewidth]{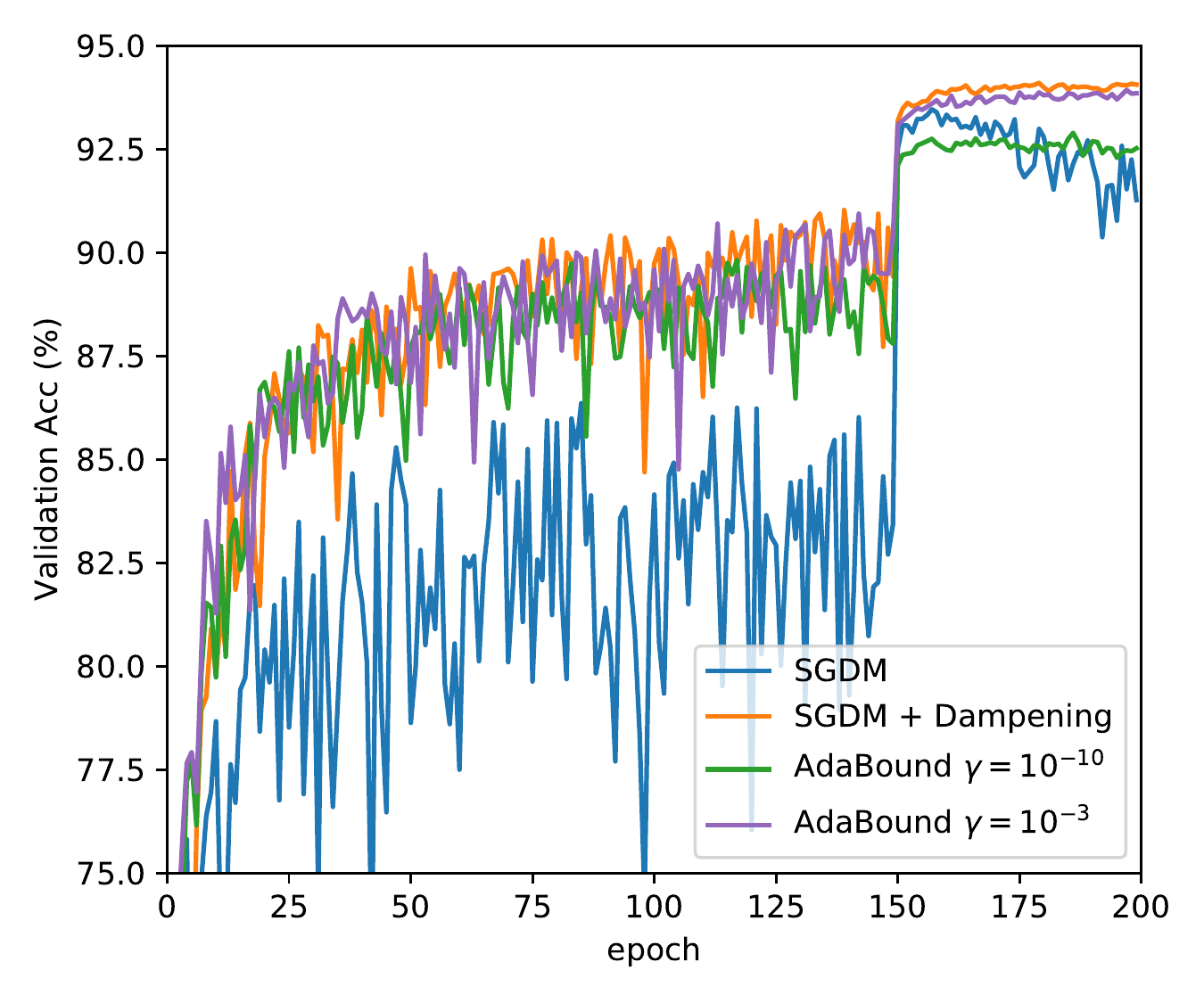}
    \end{subfigure}
    \caption{Training curves for AdaBound and momentum SGD (SGDM) (\textbf{left: } training accuracy, \textbf{right: } test accuracy) when optimizing a Wide ResNet 28-2 on the CIFAR-10 dataset. AdaBound suffers a significant performance drop both in training accuracy and final test performance with very small values of $\gamma$. Adding dampening ($\kappa = \bone$ instead of $\kappa = 0$ in \eqref{eq:dampening}) to SGDM yields a momentum update which matches AdaBound and other Adam-like methods, causing a dramatic performance increase (compare blue and orange curves) and providing performance similar to AdaBound (orange and purple curves).}
    \label{fig-curves}
    \end{figure}
\section{Discussion}

In this technical report, we identified issues in the proof of the main Theorem of \cite{adabound}, which presents a regret rate guarantee for AdaBound. We presented an instance where the statement does not hold, and provided a $O( \sqrt T)$ regret guarantee under different -- and arguably less restrictive -- assumptions.  Finally, we observed empirically that AdaBound with a theoretically optimal $\gamma$ indeed yields superior performance, although it degenerates to a specific form of momentum SGD. Our experiments suggest that this form of SGDM (with a dampening factor equal to its momentum) performs competitively to AdaBound on CIFAR.

\subsection*{Acknowledgements}

We are in debt to Rachit Nimavat for proofreading the manuscript and the extensive discussion, and thank Sudarshan Babu and Liangchen Luo for helpful comments.

\bibliography{collection}
\bibliographystyle{iclr2019_conference}

\end{document}